\definecolor{turquoise}{cmyk}{0.65,0,0.1,0.3}
\definecolor{purple}{rgb}{0.65,0,0.65}
\definecolor{dark_green}{rgb}{0, 0.5, 0}
\definecolor{orange}{rgb}{0.8, 0.6, 0.2}
\definecolor{red}{rgb}{0.8, 0.2, 0.2}
\definecolor{blueish}{rgb}{0.0, 0.7, 1}
\definecolor{light_gray}{rgb}{0.7, 0.7, .7}
\definecolor{pink}{rgb}{1, 0, 1}
\definecolor{dark_red}{rgb}{0.5, 0, 0}
\newcommand{\hide}[1]{{}} 
\newcommand{\Figure}[1]{Figure~\ref{fig:#1}}
\newcommand{\Table}[1]{Table~\ref{tbl:#1}}
\newcommand{\eq}[1]{(\ref{eq:#1})}
\newcommand{\Section}[1]{Section~\ref{sec:#1}}
\newcommand{\drawmyfilename}{\begin{flushleft}\begin{tikzpicture}[overlay,rotate=90,transform shape] \protect\filldraw[black] (0,+.1in) circle (0pt) node[anchor=west] {\color{red} [\currfilename{}]}; \end{tikzpicture} \end{flushleft}\vspace{-\baselineskip}}
\renewcommand{\drawmyfilename}{\vspace{-.25\baselineskip}} 
\renewcommand{\paragraph}[1]{\vspace{1.5\parskip}\textbf{#1}.}
\DeclareMathAlphabet\mathbfcal{OMS}{cmsy}{b}{n}
\newcommand{\CIRCLE}[1]{\raisebox{.5pt}{\footnotesize \textcircled{\raisebox{-.6pt}{#1}}}}
\newtheorem{observation}{Observation}
\DeclareMathOperator{\proj}{Proj}
\newcommand{\SIF}{SIF~\cite{genova_iccv19}}
\newcommand{\VP}{VP~\cite{tulsiani_cvpr17}}
\newcommand{\OccNet}{OccNet~\cite{occnet_cvpr19}}
\newcommand{\PtoM}{P2M~\cite{wang_eccv18}}
\newcommand{\AtlasNet}{AtlasNet~\cite{atlasnet_cvpr18}}
\newcommand{\SQ}{SQ~\cite{paschalidou_cvpr19}}
\newcommand{\loss}[1]{\mathcal{L}_\text{#1}}
\newcommand{\R}{\mathbb{R}}
\newcommand{\expect}{\mathbb{E}}
\newcommand{\bT}{\mathbf{T}}
\newcommand{\bo}{\mathbf{o}}
\newcommand{\latent}{\boldsymbol{\lambda}}
\newcommand{\object}{\mathcal{O}}
\newcommand{\union}{\mathcal{U}}
\newcommand{\sdf}{\Phi}
\newcommand{\convex}{\mathcal{C}}
\newcommand{\pars}{{\boldsymbol{\beta}}}
\newcommand{\x}{\mathbf{x}}
\newcommand{\n}{\mathbf{n}}
\newcommand{\halfspace}{\mathcal{H}}
\newcommand{\encoder}{\mathcal{E}}
\newcommand{\groupencoder}{\mathcal{E}}
\newcommand{\groupdecoder}{\mathcal{D}}
\newcommand{\decoder}{\mathcal{D}}
\newcommand{\slope}{\sigma}
\newcommand{\smooth}{\delta}
\newcommand{\sigmoid}{\text{{Sigmoid}}}
\newcommand{\softmax}{\text{{LogSumExp}}}
\ifcvprfinal\pagestyle{empty}\fi
\begin{document}

\title{\textbf{{\huge CvxNet:} \\ Learnable Convex Decomposition} \\[1em]}

\author{%
Boyang Deng \\ Google Research \and
Kyle Genova \\ Google Research \and 
Soroosh Yazdani \\ Google Hardware \and
Sofien Bouaziz \\ Google Hardware \and
Geoffrey Hinton \\ Google Research \and
Andrea Tagliasacchi \\ Google Research
} 
\date{}

\maketitle
\pagestyle{plain} 

\begin{abstract}
Any solid object can be decomposed into a collection of convex polytopes (in short, convexes).
When a small number of convexes are used, such a decomposition can be thought of as a piece-wise approximation of the geometry.
This decomposition is fundamental in computer graphics, where it provides one of the most common ways to approximate geometry, for example, in real-time physics simulation.
A convex object also has the property of being simultaneously an explicit and implicit representation: one can interpret it \textit{explicitly} as a mesh derived by computing the vertices of a convex hull, or \textit{implicitly} as the collection of half-space constraints or support functions.
Their implicit representation makes them particularly well suited for neural network training, as they abstract away from the topology of the geometry they need to represent.
However, at testing time, convexes can also generate explicit representations -- polygonal meshes -- which can then be used in any downstream application.
We introduce a network architecture to represent a low dimensional family of convexes. This family is automatically derived via an auto-encoding process.
We investigate the applications of this architecture including automatic convex decomposition, image to 3D reconstruction, and part-based shape retrieval.
\end{abstract}

\begin{figure}[t]
\centering
\begin{overpic} 
[width=\linewidth]
{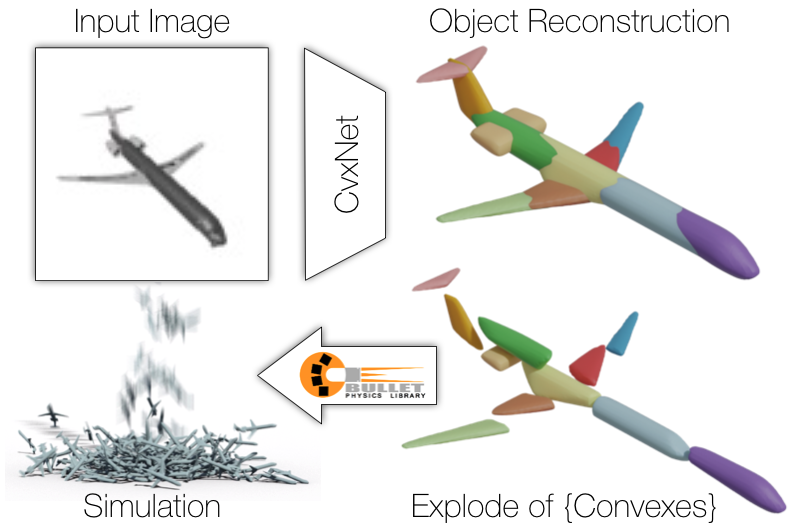}
\end{overpic}
\drawmyfilename
\caption{
Our method reconstruct a 3D object from an input image as a collection of convex hulls, and we visualize the explode of these convexes.
Notably, CvxNet outputs polygonal mesh representations of convex polytopes \textit{without} requiring the execution of computationally expensive iso-surfacing~(e.g. Marching Cubes).
This means the representation outputted by CvxNet can then be readily used for physics simulation~\cite{bullet}, as well as many other downstream applications that consume polygonal meshes.
}
\label{fig:teaser}
\end{figure}


\section{Introduction}
While images admit a standard representation in the form of a scalar function uniformly discretized on a grid, the curse of dimensionality has prevented the effective usage of analogous representations for learning 3D geometry.
%
Voxel representations have shown some promise at low resolution \cite{brock_arxiv16, gadelha_17, liao_cvpr18, rezende2016unsupervised, stutz2018learning, ulusoy_15, wu2016learning}, while hierarchical representations have attempted to reduce the memory footprint required for training~\cite{riegler2017octnet,tatarchenko2017octree,wang2018adaptive}, but at the significant cost of complex implementations.
%
Rather than representing the \textit{volume} occupied by a 3D object, one can resort to modeling its \textit{surface} via a collection of points~\cite{achlioptas_icml17,fan2017point}, polygons~\cite{kanazawa_eccv18,ranjan_eccv18,wang_eccv18}, or surface patches~\cite{atlasnet_cvpr18}.
%
Alternatively, one might follow Cezanne's advice and ``treat nature by means of the cylinder, the sphere, the cone, everything brought into proper perspective'', and think to approximate 3D geometry as~\textit{geons}~\cite{biederman_87} -- collections of simple to interpret geometric primitives~\cite{tulsiani_cvpr17,zou_cvpr17}, and their composition~\cite{csgnet_cvpr18,genova_iccv19}.
%
Hence, one might rightfully start wondering \textit{``why so many representations of 3D data exist, and why would one be more advantageous than the other?''}
One observation is that multiple equivalent representations of 3D geometry exist because real-world applications need to perform different \textit{operations} and queries on this data (~\cite[Ch.1]{pmp}).
%
For example, in computer graphics, points and polygons allow for very efficient rendering on GPUs, while volumes allow artists to sculpt geometry without having to worry about tessellation~\cite{zbrush} or assembling geometry by smooth composition~\cite{angles2017sketch}, while primitives enable highly efficient collision detection~\cite{thul_sig18} and resolution~\cite{tkach_siga16}.
%
In computer vision and robotics, analogous trade-offs exist: surface models are essential for the construction of low-dimensional parametric templates essential for  tracking~\cite{blanz_sig99,bogo_cvpr16}, volumetric representations are key to capturing 3D data whose topology is unknown~\cite{kinfu,dynfu}, while part-based models provide a natural decomposition of an object into its semantic components. Part-based models create a representation useful to reason about extent, mass, contact, ... quantities that are key to describing the scene, and planning motions~\cite{diffrigidbody,realtosim}.

\paragraph{Contributions}
In this paper, we propose a novel representation for geometry based on primitive decomposition. The representation is parsimonious, as we \textit{approximate} geometry via a \textit{small number} of \textit{convex} elements, while we seek to allow low-dimensional representation to be automatically inferred from data -- without any human supervision.
More specifically, inspired by recent works~\cite{tulsiani_cvpr17,genova_iccv19,occnet_cvpr19} we train our pipeline in an unsupervised
manner: predicting the primitive configuration as well as their parameters by checking whether the reconstructed geometry matches the geometry of the target.
%
We note how we inherit a number of interesting \textit{properties} from several of the aforementioned representations.
As it is part-based it is naturally locally supported, and by training on a shape collection, parts have a semantic association (i.e. the same element is used to represent the backs of chairs).
Although part-based, each of them is not restricted to belong to the class of boxes~\cite{tulsiani_cvpr17}, ellipsoids~\cite{genova_iccv19}, or sphere-meshes~\cite{tkach_siga16}, but to the more general class of convexes.
As a convex is defined by a collection of half-space constraints, it can be simultaneously decoded into an explicit (polygonal mesh), as well as implicit (indicator function) representation.
Because our encoder decomposes geometry into convexes, it is immediately usable in any application requiring real-time physics simulation, as collision resolution between convexes is efficiently decided by GJK~\cite{gjk} (\Figure{teaser}). 
Finally, parts can interact via structuring~\cite{genova_iccv19} to generate smooth blending between parts.

\section{Related works}

One of the simplest high-dimensional representations is voxels, and they are the most commonly used representation for discriminative \cite{voxnet2015, qi2016volumetric, song2016deep}
models, due to their similarity to image based convolutions. 
Voxels have also been used successfully for generative models \cite{wu20153d, choy20163d, girdhar2016learning, rezende2016unsupervised, stutz2018learning, wu2016learning}. However, the memory requirements of voxels makes them unsuitable for resolutions larger than $64^3$. One can reduce the memory consumption significantly by using octrees that take advantage of the sparsity of voxels \cite{riegler2017octnet, wang2017cnn, wang2018adaptive, tatarchenko2017octree}. This can extend the resolution to $512^3$, for instance, but comes at the cost of more complicated implementation.

\begin{figure*}[th]
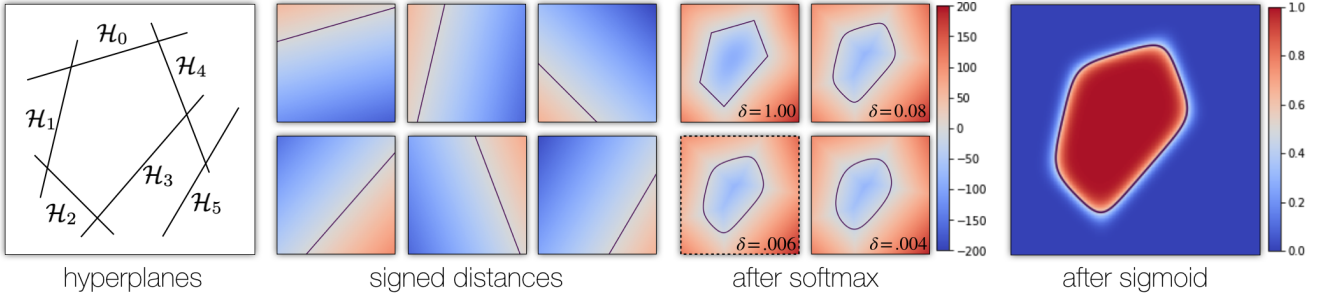

\centering
\begin{overpic} 
[width=\linewidth]
{\currfiledir/cvxdec.png}
\end{overpic}
\drawmyfilename{}
\caption{
\textbf{From \{hyperplanes\} to occupancy} --
A collection of hyperplane parameters for an image specifies the indicator \textit{function} of a convex.
The soft-max allows gradients to propagate through all hyperplanes and allows for the generation of \textit{smooth} convex, while the sigmoid parameter controls the \textit{slope} of the transition in the generated indicator -- note that our soft-max function is a LogSumExp. 
%
}
\label{fig:cvxdec}
\end{figure*}

\newpage
\paragraph{Surfaces}
In computer graphics, \textit{polygonal meshes} are the standard representation of 3D objects.
Meshes have also been considered for discriminative classification by applying graph convolutions to the mesh \cite{masci2015geodesic, bronstein2017geometric, guo20153d, monti2017geometric}. 
Recently, meshes have also been considered as the output of a network \cite{atlasnet_cvpr18, kong2017using, wang_eccv18}.
A key weakness of these models is the fact that they may 
produce self-intersecting meshes. 
Another natural high-dimensional representation that has garnered some traction in vision is the \textit{point cloud} representation. Point clouds are the natural representation of objects if one is using sensors such as depth cameras or LiDAR, and they require far less memory than voxels.
Qi et al. \cite{qi2017pointnet, qi2017pointnet++} used point clouds as a representation for discriminative deep learning tasks.
Hoppe et al. \cite{hoppe1992} used point clouds for surface mesh reconstruction (see also \cite{berger2017} for a survey
of other techniques).
Fan et. al. \cite{fan2017point} and Lin et. al. \cite{lin2018learning} used point clouds for 3D reconstruction using deep
learning. 
However, these approaches require additional non-trivial post-processing steps to generate the final 3D mesh.

\paragraph{Primitives}
Far more common is to approximate the input shape by set of volumetric primitives. With this perspective in mind, representing shapes as voxels will be a special case, where the primitives are unit cubes in a lattice. 
Another fundamental way to describe 3D shapes is via \textit{Constructive Solid Geometry}~\cite{laidlaw1986constructive}.
Sherma~et.~al.~\cite{csgnet_cvpr18} presents a model that will output a program (i.e. set of Boolean operations on shape primitives) that generate the input image or shape.
In general, this is a fairly difficult task.
Some of the classical primitives used in graphics and computer vision are blocks world \cite{roberts1963machine}, 
generalized cylinders \cite{binford1971visual}, geons \cite{biederman_87}, and even 
Lego pieces \cite{van2015part}. In \cite{tulsiani_cvpr17}, a deep CNN is used to interpret a
shape as a union of simple rectangular prisms. They also note that their model provides a
consistent parsing across shapes (i.e. the head is captured by the same primitive), allowing
some interpretability of the output.
In \cite{paschalidou_cvpr19}, they extended cuboids to {\em superquadrics}, showing that the extra flexibility will result in better reconstructions.

\paragraph{Implicit surfaces}
If one generalizes the shape primitives to analytic surfaces (i.e. level sets of analytic
functions), then new analytic tools become available for generating shapes. 
In \cite{occnet_cvpr19,imnet}, for instance, they train a model to discriminate inside coordinates from outside
coordinates (referred to as an {\em occupancy function} in the paper, and as an \textit{indicator function}
in the graphics community). Park et. al. \cite{park2019deepsdf} used the signed distance function to the
surface of the shape to achieve the same goal.
One disadvantage of the implicit
description of the shape is that most of the interpretability is missing from the final answer.
In \cite{genova_iccv19}, they take a more geometric approach and restrict to level sets of
axis-aligned Gaussians. 
Partly due to the restrictions of these functions, their representation struggles on shapes with angled parts, but they do recover the interpretability that~\cite{tulsiani_cvpr17} offers. 

\paragraph{Convex decomposition}
In graphics, a common method to represent shapes is to describe them as a collection of convex objects. Several methods for convex decomposition of meshes have been proposed~\cite{graham1972efficient, preparata1977convex}.
In machine learning, however, we only find early attempts to approach convex hull computation via neural networks~\cite{leung1997neural}.
Splitting the meshes into exactly convexes generally 
produces too many pieces \cite{chazelle1981convex}.
As such, it is more prudent to seek small number of convexes  that \textit{approximate} the input shape~\cite{ghosh2013fast, lien2007approximate, liu2016nearly, mamou2016volumetric, mamou2009simple}.
Recently \cite{thul_sig18} also extended convex decomposition to the spatio-temporal domain, by considering moving geometry.
Our method is most related to \cite{tulsiani_cvpr17} and \cite{genova_iccv19}, in that we train an occupancy function.
However, we choose our space of functions so that their level sets are approximately convex, and use these as building blocks.

\begin{figure*}
\centering
\begin{overpic} 
[width=\linewidth]
{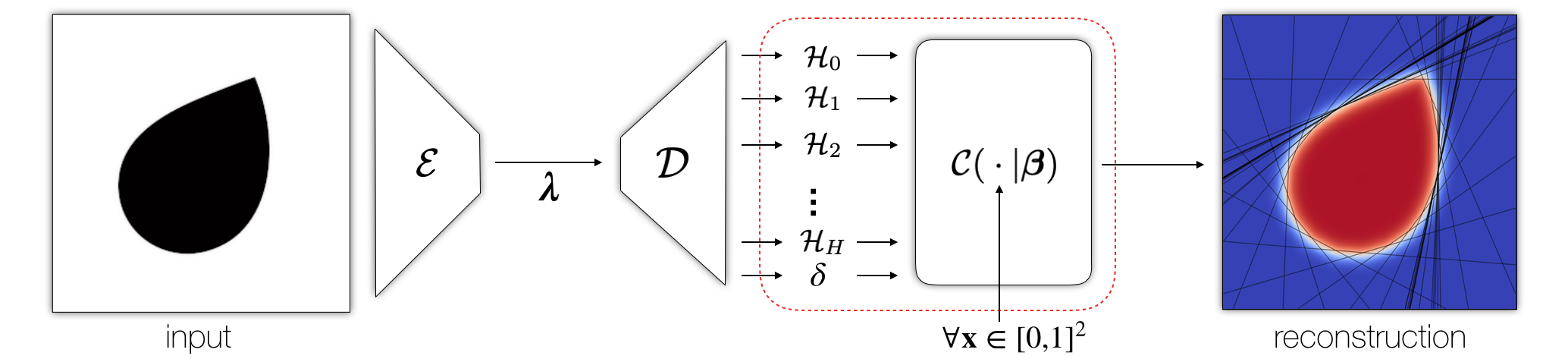}
\end{overpic}
\drawmyfilename{}
\caption{
\textbf{Convex auto-encoder} --
%
The encoder~$\encoder$ creates  a low dimensional latent vector representation $\lambda$, decoded into a collection of hyperplanes by the decoder~$\decoder$. The training loss involves reconstructing the value of the input image at random pixels~$\x$.
}
\label{fig:cvxfam}
\end{figure*}

\section{Method -- CvxNet}
Our object is represented via an indicator $\object: \R^3\! \rightarrow\![0,1]$, and with $\partial\object = \{\x \in \R^3\:|\:\object(x)=0.5\}$ we indicate the surface of the object.
The indicator function is defined such that $\{\x \in \R^3\:|\:\object(x) = 0\}$ defines the outside of the object and $\{\x \in \R^3\:|\:\object(x) = 1\}$ the inside.
Given an input ({\it e.g.} an image, point cloud, or voxel grid) an encoder estimates the parameters $\{\pars_k\}$ of our template representation $\hat\object(\cdot)$ with $K$ primitives (indexed by $k$).
We then evaluate the template at random sample points $\x$, and our training loss ensures $\hat\object(\x) \approx \object(\x)$.
In the discussion below, without loss of generality, we use 2D illustrative examples where $\object: \R^2\! \rightarrow\![0,1]$.
Our representation is a \textit{differentiable convex decomposition}, which is used to train an image encoder in an end-to-end fashion.
We begin by describing a differentiable representation of a single convex object~(\Section{cvxdec}).
Then we introduce an auto-encoder architecture to create a low-dimensional family of approximate convexes~(\Section{cvxfam}).
These allow us to represent objects as spatial compositions of convexes (\Section{multicvx}).
We then describe the losses used to train our networks~(\Section{losses}) and mention a few implementation details (\Section{implementation}).

\subsection{Differentiable convex indicator -- \Figure{cvxdec}}
\label{sec:cvxdec}
We define a decoder that given a collection of (unordered) half-space constraints constructs the indicator function of a single convex object; such a function can be evaluated at any point $\x \in \mathbb{R}^3$.
We define $\halfspace_h(\x)=\n_h\cdot\x+d_h$ as the signed distance of the point $\x$ from the $h$-th
plane with normal $\n_h$ and offset $d_h$.
Given a sufficiently large number $H$ of half-planes the signed distance function of any convex object can be approximated by taking the $\max$ of the signed distance functions of the planes.
To facilitate gradient learning, instead of maximum, we use the smooth maximum function $\softmax$ and define the \textit{approximate} signed distance function, $\sdf(x)$: 
\begin{equation}
    \sdf(\x) = \softmax\{ \smooth \halfspace_h(\x) \},
\end{equation}
Note this is an approximate SDF, as the property $\|\nabla \sdf(\x)\| \!=\!1$ is not necessarily satisfied $\forall \x$.
We then convert the signed distance function to an indicator function~$\convex: \R^3\! \rightarrow\![0,1]$:

\begin{equation}
    \convex(\x|\pars) = \sigmoid( - \slope \sdf(\x)),
    \label{eq:indicator}
\end{equation}
%
We denote the collection of hyperplane parameters as $\mathbf{h}=\{(\mathbf{n}_h, d_h) \}$, and the overall set of parameters for a convex as $\pars = [\mathbf{h}, \slope]$.
We treat $\slope$ as a hyperparameter, and consider the rest as the learnable parameters of our representation.
As illustrated in \Figure{cvxdec}, the parameter $\smooth$ controls the smoothness of the generated convex, while $\slope$ controls the sharpness of the transition of the indicator function. Similar to the smooth maximum function, the soft classification boundary created by $\sigmoid$ facilitates training.
 
In summary, given a collection of hyperplane parameters, this differentiable module generates a function that can be evaluated at any position $\x$.

\subsection{Convex encoder/decoder -- \Figure{cvxfam}}
\label{sec:cvxfam}
A sufficiently large set of hyperplanes can represent any convex object, but one may ask whether it would be possible to discover some form of correlation between their parameters.
Towards this goal, we employ an auto-encoder architecture illustrated in \Figure{cvxfam}.
Given an input, the encoder $\encoder$ derives a bottleneck representation $\latent$ from the input.
Then, a decoder $\decoder$ derives the collection of hyperplane parameters.
While in theory permuting the $H$ hyperplanes generates the same convex, the decoder $\decoder$ correlates a particular hyperplane with a corresponding orientation.
This is visible in \Figure{correlation}, where we color-code different 2D hyperplanes and indicate their orientation distribution in a simple 2D auto-encoding task for a collection of axis-aligned ellipsoids.
As ellipsoids and oriented cuboids are convexes, we argue that the architecture in \Figure{cvxfam} allows us to generalize the core geometric primitives proposed in \VP{} and \SIF{}; we verify this claim in \Figure{interp2d}.


\begin{figure}[h]
\centering
\begin{minipage}[c]{.4\linewidth}
  \begin{overpic} 
    [width=\linewidth]
    {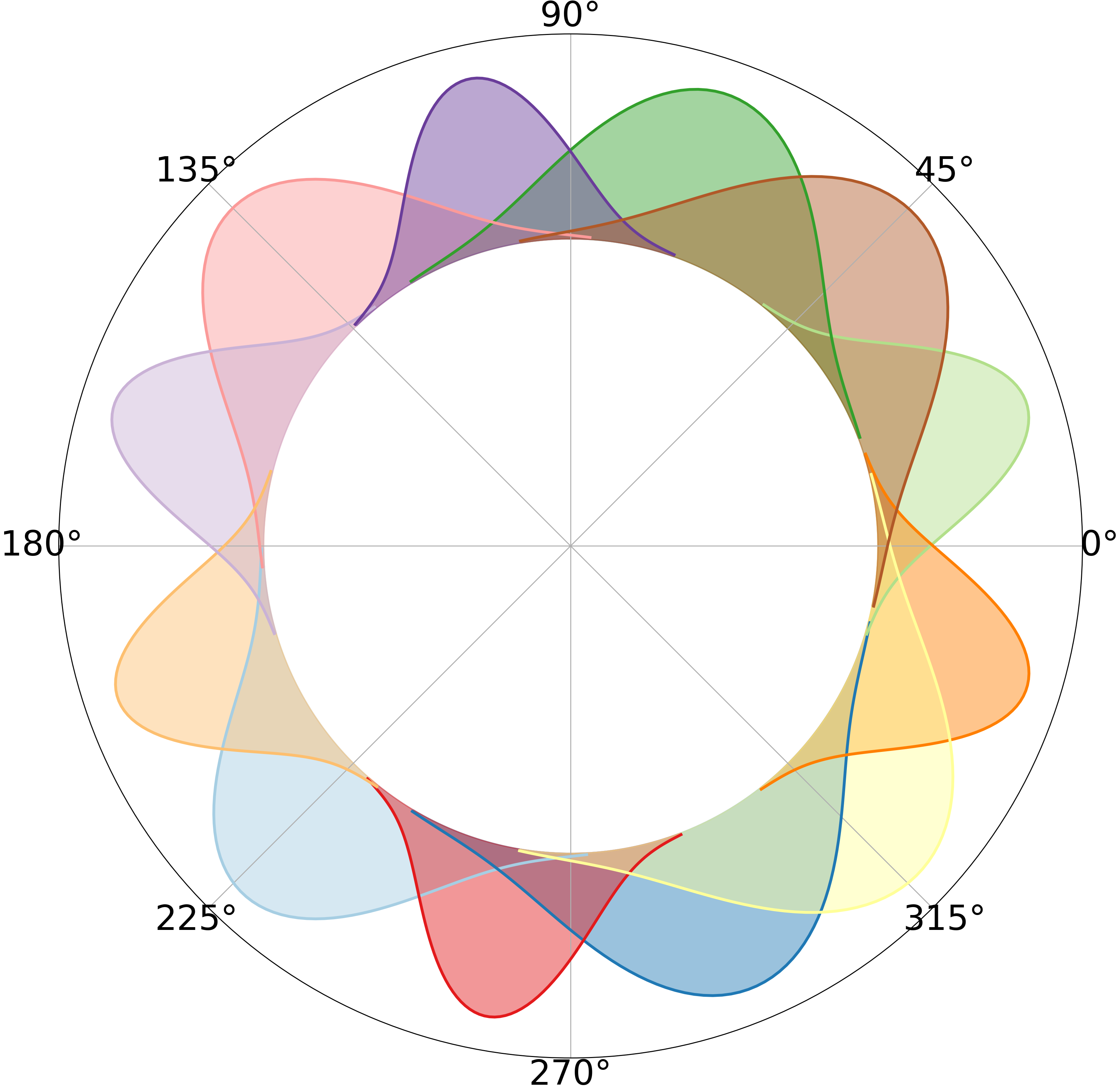}
    \end{overpic}
\end{minipage}
\begin{minipage}[c]{0.58\linewidth}
\caption{
\textbf{Correlation} --
While the description of a convex, $\{(\mathbf{n}_h, \mathbf{d}_h) \}$, is permutation invariant we employ an encoder/decoder that implicitly establishes an ordering.
Our visualization reveals how a particular hyperplane typically represents a particular subset of orientations.
}
\label{fig:correlation}
\end{minipage}
\end{figure}
\begin{figure}[h]
\centering
\begin{minipage}[c]{.4\linewidth}
  \begin{overpic} 
    [width=\linewidth]
    {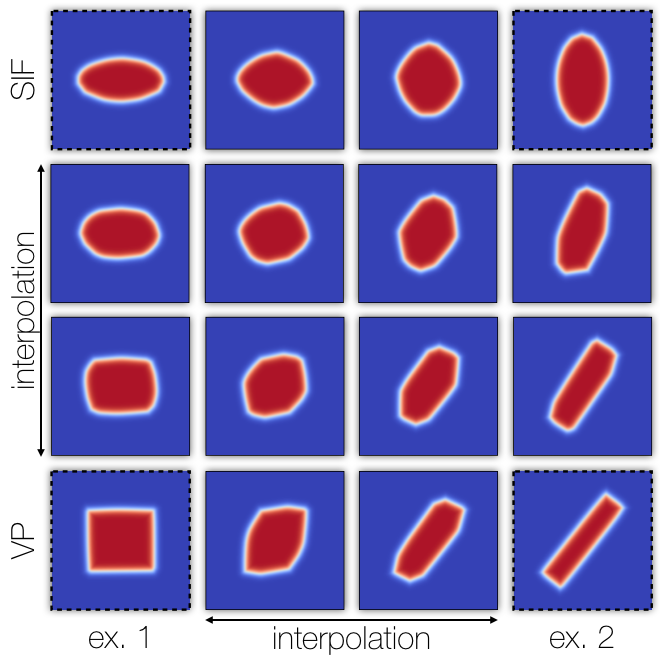}
    \end{overpic}
\end{minipage}
\begin{minipage}[c]{0.58\linewidth}
\caption{
\textbf{Interpolation} --
%
We compute latent code of shapes in the corners using CvxNet. We then linearly interpolate latent codes to synthesize shapes in-between.
Our primitives generalize the shape space of \VP{} (boxes) and \SIF{} (ellipsoids) so we can interpolate between them smoothly.
}
\label{fig:interp2d}
\end{minipage}
\end{figure}

\subsection{Explicit interpretation -- \Figure{simulation}}
\label{sec:explicit}
What is significantly different from other methods that employ indicator functions as trainable representations of 3D geometry, is that convexes generated by our network admit an \textit{explicit} interpretation: they can be easily converted into polygonal meshes.
This is in striking contrast to \cite{park2019deepsdf,imnet,genova_iccv19,occnet_cvpr19}, where a computationally intensive iso-surfacing operation needs to be executed to extract their surface~(e.g.~Marching Cubes~\cite{marchingcubes}).
More specifically, iso-surfacing techniques typically suffer the curse of dimensionality, with a performance that scales as $1/\varepsilon^d$, where $\varepsilon$ the desired spatial resolution and $d$ is usually $3$. 
Conversely, as we illustrate in \Figure{simulation}, we only require the execution of two duality transforms, and the computations of two convex hulls of $H$ points.
The complexity of these operations is clearly \textit{independent} of any resolution parameter~$\varepsilon$.
\begin{figure}[h]
\centering
\begin{overpic} 
[width=\linewidth]
{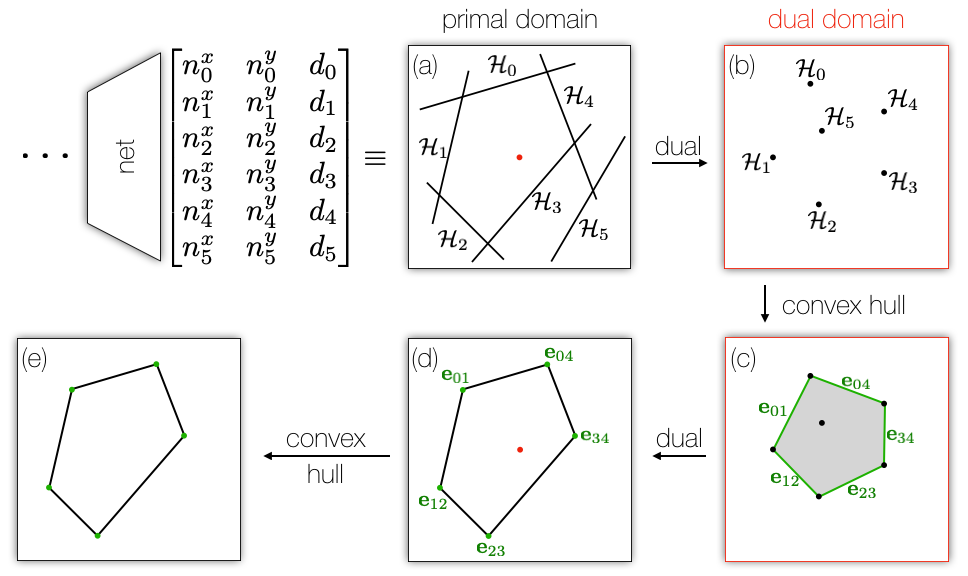}
\end{overpic}
\drawmyfilename{}
\caption{
\textbf{From \{hyperplanes\} to polygonal meshes} --
The polygonal mesh corresponding to a set of hyperplanes (a) can be computed by transforming planes into points via a duality transform (b), the computation of a convex hull (c), a second duality transform (d), and a final convex hull execution (e). The output of this operation is a \textit{polygonal mesh}.
Note this operation is efficient, output sensitive, and, most importantly does not suffer the curse of dimensionality. 
Note that, for illustration purposes, the duality coordinates in this figure are fictitious.
}
\label{fig:simulation}
\end{figure}

%
%

\begin{figure*}[ht]
\centering
\begin{overpic} 
[width=\linewidth]
{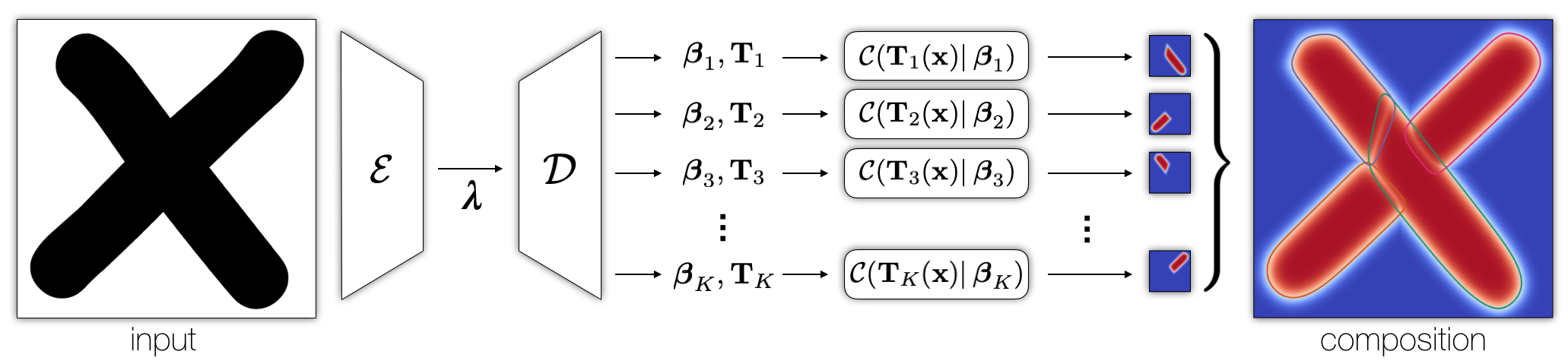}
\end{overpic}
\drawmyfilename{}
\caption{
\textbf{Multi-convex auto-encoder} --
Our network approximates input geometry as a \textit{composition} of convex elements. Note that this network \textit{does not} prescribe how the final image is generated, but merely output the shape $\{\pars{}_k\}$ and pose $\{\mathbf{T}_k\}$ parameters of the abstraction.
Note that this is an illustration where the parameters $\{\boldsymbol{\beta}_k\}, \{ \boldsymbol{T}_k \}$ have been directly optimized via SGD with a preset $\smooth$.
%
}
\label{fig:multicvx}
\end{figure*}

\subsection{Multi convex decomposition -- \Figure{multicvx}}
\label{sec:multicvx}
Having a learnable pipeline for a single convex object, we can now expand the expressivity of our model by representing generic \textit{non-convex} objects as \textit{compositions} of convexes~\cite{thul_sig18}.
To achieve this task an encoder $\groupencoder$ outputs a low-dimensional bottleneck representation of all $K$ convexes $\boldsymbol{\lambda}$ that $\groupdecoder$ decodes into a \textit{collection} of $K$ parameter tuples.
Each tuple~(indexed by $k$) is comprised of a shape code $\pars_k$, and corresponding transformation $\mathbf{T}_k(\x)=\x+\mathbf{c}_k$ that transforms the point from world coordinates to local coordinates. $\mathbf{c}_k$ is the predicted translation vector~(\Figure{multicvx}).


\subsection{Training losses}
\label{sec:losses}

First and foremost, we want the~(ground truth) indicator function of our object $\object$ to be well approximated:
\begin{equation}
\mathcal{L}_\text{approx}(\omega) = \expect_{\x \sim \R^3} \| \hat\object(\x) - \object(\x) \|^2,
\label{eq:loss_approx}
\end{equation}
where $\hat\object(\x) = \max_k \{ \convex_k (\x) \}$, and $\convex_k(\x) = \convex(\bT_k(\x) | \pars_k )$.
The application of the $\max$ operator produces a perfect union of convexes. While constructive solid geometry typically applies the $\min$ operator to compute the union of signed distance functions, note that we apply the $\max$ operator to indicator functions instead with the same effect;
see~\Section{union} in the supplementary material for more details.
We couple the approximation loss with several auxiliary losses that enforce the desired properties of our decomposition.

\paragraph{Decomposition loss (auxiliary)}
We seek a parsimonious decomposition of an object akin to   Tulsiani~et~al.~\cite{tulsiani_cvpr17}.
Hence, overlap between elements should be discouraged:
\begin{equation}
\mathcal{L}_\text{decomp}(\omega) = \expect_{\x \sim \R^3} \| \text{relu}(\underset{k}{\text{sum}} \{ \convex_k (\x) \} - \tau) \|^2,
\label{eq:loss_decomp}
\end{equation}
where we use a permissive $\tau\!=\!2$, and note how the ReLU activates the loss only when an overlap occurs.

\begin{figure}[b]
\centering
\begin{overpic} 
[width=\linewidth]
{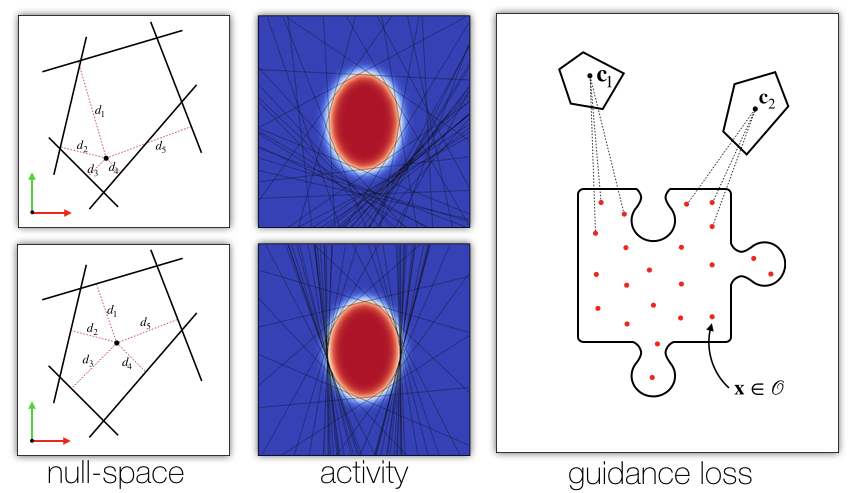}
\end{overpic}
\drawmyfilename{}
\caption{
\textbf{Auxiliary losses -- }
Our $\loss{unique}$ loss (left) prevents the existence of a null-space in the specification of convexes, and (middle) ensures inactive hyperplanes can be easily activated during training.
(right)
Our $\loss{guide}$ move convexes towards the representation of samples drawn from within the object $\x \in \object$.
}
\label{fig:losses}
\end{figure}
\paragraph{Unique parameterization loss (auxiliary)}
While each convex is parameterized with respect to the origin, there is a nullspace of solutions -- we can move the origin to another location within the convex, and update offsets $\{d_h\}$ and transformation $\bT$ accordingly -- see \Figure{losses}(left).
To remove such a null-space, we simply regularize the magnitudes of the offsets for each of the $K$ elements:
\begin{equation}
\loss{unique}(\omega) = \tfrac{1}{H} \sum_{h} \| d_h \|^2
\label{eq:loss_unique}
\end{equation}
In the supplementary material, we prove that minimizing $\loss{unique}$ leads to a unique solution and centers the convex body to the origin. This loss further ensures that \enquote{inactive} hyperplane constraints can be readily re-activated during learning.
Because they fit tightly around the surface they are therefore sensitive to shape changes.

\paragraph{Guidance loss (auxiliary)}
As we will describe in \Section{implementation}, we use offline sampling to speed-up training.
However, this can cause severe issues.
In particular, when a convex \enquote{falls within the cracks} of sampling
(i.e.~$\nexists x \:|\: \convex(x)\! > \!0.5$),
it can be effectively removed from the learning process.
This can easily happen when the convex enters a degenerate state (i.e.~$d_h\!\!=\!0 \:\: \forall h$).
Unfortunately these degenerate configurations are encouraged by the loss~\eq{loss_unique}.
We can prevent collapses by ensuring that each of them represents a certain amount of information (i.e. samples):
\begin{equation}
\loss{guide}(\omega) = \tfrac{1}{K} \sum_k \tfrac{1}{N} \sum_{\x \in \mathcal{N}_k^N} \| \convex_k(\x) - \object(\x) \|^2,
\label{eq:guide}
\end{equation}
where $\mathcal{N}^N_k$ is the subset of $N$ samples from the set $\x~\sim~\{\object\}$ with the smallest distance value~$\Phi_k(\x)$ from $\convex_k$.
In other words, each convex is responsible for  representing \textit{at least} the $N$ closest interior samples.

\paragraph{Localization loss (auxiliary)}
When a convex is far from interior points, the loss in \eq{guide} suffers from vanishing gradients due to the sigmoid function.
We overcome this problem by adding a loss with respect to $\mathbf{c}_k$, the translation vector of the $k$-th convex:
\begin{equation}
\loss{loc}(\omega) = \tfrac{1}{K} \sum_{\x \in \mathcal{N}^1_k} 
\| \mathbf{c}_k - x \|^2
\label{eq:loc}
\end{equation}

\paragraph{Observations}
Note that we supervise the indicator function $\convex$ rather than $\Phi$, as the latter \textit{does not} represent the signed distance function of a convex~(e.g. $\| \nabla \Phi(x) \| \neq 1$).
Also note how the loss in \eq{loss_decomp} is reminiscent of SIF~\cite[Eq.1]{genova_iccv19}, where the overall surface is modeled as a sum of \textit{meta-ball} implicit functions~\cite{metaball} -- which the authors call \enquote{structuring}.
The core difference lies in the fact that~\SIF{} models the surface of the object $\partial\object$ as an iso-level of the function \textit{post} structuring -- therefore, in most cases, the iso-surface of the individual primitives do not approximate the target surface, resulting in a slight loss of interpretability in the generated representation.


\subsection{Implementation details}
\label{sec:implementation}
To increase training speed, we sample a set of points on the ground-truth shape offline, precompute the ground truth quantities, and then randomly sub-sample from this set during our training loop.
For volumetric samples, we use the samples from~\OccNet{}, while for surface samples we employ the \enquote{near-surface} sampling described in~\SIF{}. Following~\SIF{}, we also tune down $\loss{approx}$ of \enquote{near-surface} samples by $0.1$.
We draw $100k$ random samples from the bounding box of $\object$ and $100k$ samples from each of $\partial\object$ to construct the points samples and labels. We use a sub-sample set (at training time) with $1024$ points for both sample sources. 
Although Mescheder~et~ al.~\cite{occnet_cvpr19} claims that using uniform volumetric samples are more effective than surface samples, we find that balancing these two strategies yields the best performance -- this can be attributed to the complementary effect of the losses in~\eq{loss_approx}~and~\eq{loss_decomp}.

\paragraph{Architecture details}
In all our experiments, we use the same architecture while varying the number of convexes and hyperplanes. For the \{Depth\}-to-3D task, we use 50 convexes each with 50 hyperplanes. For the RGB-to-3D task, we use 50 convexes each with 25 hyperplanes. 
Similar to~\OccNet{}, we use ResNet18 as the encoder $\encoder$ for both the \{Depth\}-to-3D and the RGB-to-3D experiments.
A fully connected layer then generates the latent code $\boldsymbol{\lambda} \in \R^{256}$ that is provided as input to the decoder $\decoder$.
For the decoder $\decoder$ we use a sequential model with four hidden layers with $(1024, 1024, 2048, |\mathbf{H}|)$ units respectively.
The output dimension is $|\mathbf{H}| = K(4+3H)$ where for each of the $K$ elements we specify a translation ($3$ DOFs) and a smoothness ($1$ DOFs). Each hyperplane is specified by the~(unit) normal and the offset from the origin ($3H$ DOFs).
In all our experiments, we use a batch of size $32$ and train with Adam with a learning rate of $10^{-4}$, $\beta_1=.9$, and $\beta_2=.999$. 
As determined by grid-search on the validation set, we set the weight for our losses $\{\loss{approx}:1.0, \loss{decomp}:0.1, \loss{unique}:0.001, \loss{guide}:0.01, \loss{loc}:1.0 \}$ and $\slope=75$.
\section{Experiments}
\label{sec:experiments}
%
We use the ShapeNet~\cite{chang2015shapenet} dataset in our experiments.
We use the same voxelization, renderings, and data split as in Choy~et.~al.~\cite{choy20163d}.
Moreover, we use the same multi-view depth renderings as~\cite{genova_iccv19} for our \{Depth\}-to-3D experiments, where we render each example from cameras placed on the vertices of a dodecahedron.
Note that this problem is a harder problem than 3D auto-encoding with point cloud input as proposed by~\OccNet{} and resembles more closely the single view reconstruction problem.
At training time we need ground truth inside/outside labels, so we employ the watertight meshes from~\cite{occnet_cvpr19} -- this also ensures a fair comparison to this method.
For the quantitative evaluation of semantic decomposition, we use labels from PartNet~\cite{mo2019partnet} and exploit the overlap with ShapeNet.

\begin{figure*}
\centering
\begin{overpic} 
[width=\linewidth]
{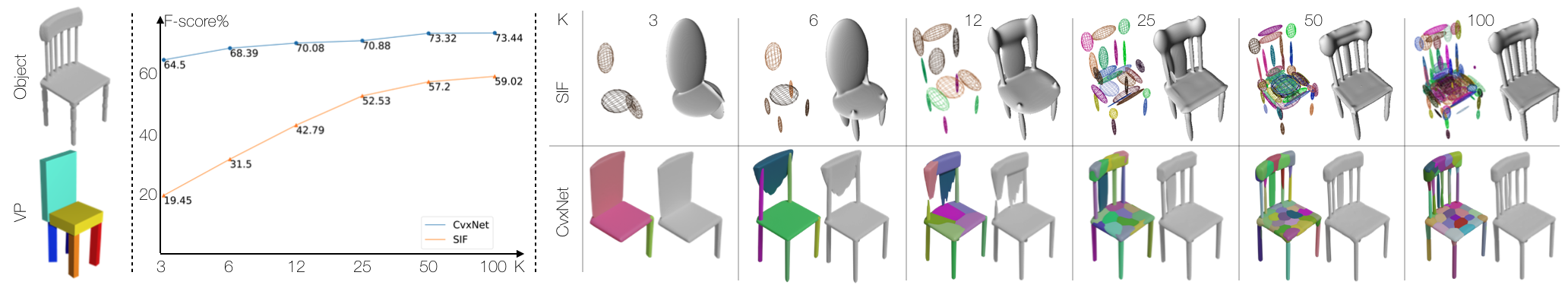}
\end{overpic}
\drawmyfilename{}
\caption{
\textbf{Analysis of accuracy vs. \# primitives} --
(left) The ground truth object to be reconstructed and the single shape-abstraction generated by \VP{}.
(middle) Quantitative evaluation (ShapeNet/Multi) of abstraction performance with an increase number of primitives -- the closer the curve is to the top-left, the better.
(right) A qualitative visualization of the primitives and corresponding reconstructions.
%
%
}
\label{fig:ratedistortion}
\end{figure*}
\begin{figure}[b]
\centering
\begin{overpic} 
[width=\linewidth]
{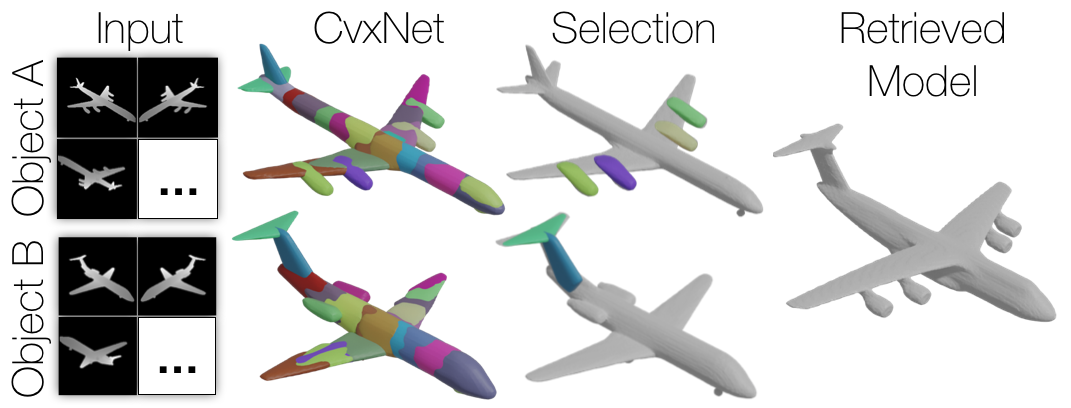}
\end{overpic}
\drawmyfilename{}
\caption{
\textbf{Part based retrieval} --
Two inputs (left) are first encoded into our CvxNet representation (middle-left), from which a user can select a subset of parts (middle-right).
We then use the concatenated latent code as an (incomplete) geometric lookup function, and retrieve the closest decomposition in the training database (right).
%
}
\label{fig:retrieval}
\end{figure}
\definecolor{partnet_arm}{RGB}{252,105,15}
\definecolor{partnet_base}{RGB}{39,147,33}
\definecolor{partnet_seat}{RGB}{202,16,31}
\definecolor{partnet_back}{RGB}{26,97,165}
\definecolor{darkgray}{rgb}{0.66, 0.66, 0.66}
\begin{figure}[b]
\begin{centering}
\begin{minipage}[c]{.33\linewidth}
  \begin{overpic} 
    [width=\linewidth]
    {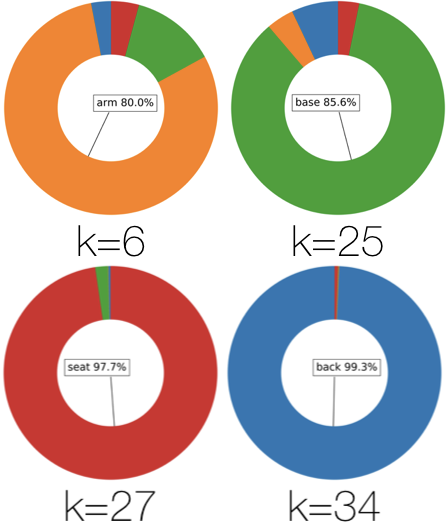}
    \end{overpic}
\end{minipage}
\begin{minipage}[c]{.62\linewidth}
  \resizebox{\linewidth}{!}{
  \begin{tabular}{c|cc|c}
  \multirow{2}{*}{Part} & \multicolumn{3}{c}{Accuracy} \\
                        & CvxNet & BAE & {\color{darkgray} BAE*} \\
  \midrule
  {\color{partnet_back} back} & \textbf{91.50\%} & 86.36\%          & {\color{darkgray} 91.81\%} \\
  {\color{partnet_arm} arm}   & 38.94\%          & \textbf{65.75\%} & {\color{darkgray} 71.32\%} \\
  {\color{partnet_base} base} & 71.95\%          & \textbf{88.46\%} & {\color{darkgray} 91.75\%} \\
  {\color{partnet_seat} seat} & \textbf{90.63\%} & 73.66\%          & {\color{darkgray} 92.91\%} \\
  \end{tabular}
  }
\end{minipage}
\end{centering}
\drawmyfilename{}
\vspace{\baselineskip}
\caption{
\textbf{Abstraction} -- 
(left) The distribution of partnet labels within each convex ID (4 out of 50).
(right) The classification accuracy for each semantic part when using the convex ID to label each point.
BAE~\cite{chen2019bae_net} is a baseline for unsupervised part segmentation. Finally, BAE* is the supervised version of BAE.
%
%
}
\label{fig:partnet}
\end{figure}


\paragraph{Methods}
We quantitatively compare our method to a number of self-supervised algorithms with different characteristics.
First, we consider~\VP{} that learns a parsimonious approximation of the input via (the union of) oriented boxes.
We also compare to the Structured Implicit Function~\SIF{} method that represents solid geometry as an iso-level of a sum of weighted Gaussians; like \VP{}, and in contrast to \OccNet{}, this methods provides an \textit{interpretable} encoding of geometry. 
Finally, from the class of techniques that \textit{directly} learn \textit{non-interpretable} representations of implicit functions, we select~\OccNet{},~\PtoM{}, and~\AtlasNet{}; in contrast to the previous methods, these solutions do not provide any form of shape decomposition.
As \OccNet{} only report results on RGB-to-3D tasks, we extend the original codebase to also solve \{Depth\}-to-3D tasks. We follow the same data  pre-processing used by~\SIF{}.


\paragraph{Metrics}
With $\hat\object$ and $\partial\hat\object$ we respectively indicate the indicator and the surface of the \textit{union} of our primitives.
We then use three quantitative metrics to evaluate the performance of 3D reconstruction:
%
\CIRCLE{1}
The \textit{Volumetric~IoU}; 
note that with $100K$ uniform samples to estimate this metric, our estimation is more accurate than the $32^3$ voxel grid estimation used by~\cite{choy20163d}.
\CIRCLE{2} The \textit{Chamfer-L1} distance, a smooth relaxation of the symmetric Hausdorff distance measuring the average between reconstruction \textit{accuracy} $\expect_{\bo \sim \partial\object} [\min_{\hat \bo \in \partial\hat\object} \|\bo - \hat\bo\|]$ and  \textit{completeness} $\expect_{\hat \bo \sim \partial\hat\object} [\min_{\bo \in \partial\object} \|\hat\bo - \bo\|]$~\cite{multirobot}.
\CIRCLE{3}
Following the arguments presented in~\cite{tatarchenko2019single}, we also employ \textit{F}-score to quantitatively assess performance. It can be understood as \enquote{the percentage of correctly reconstructed surface}.

\begin{figure*}[ht]
\centering
\begin{overpic} 
[width=\linewidth]
{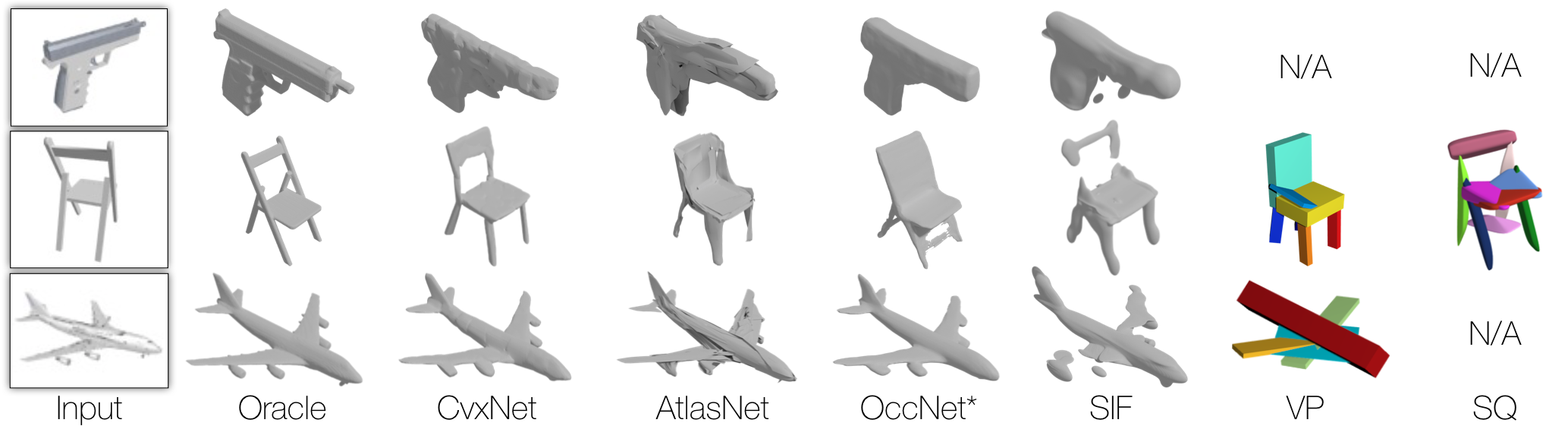}
\end{overpic}
\drawmyfilename{}
\caption{
\textbf{ShapeNet/Multi} -- 
Qualitative comparisons to \SIF{}, \AtlasNet{}, \OccNet{}, \VP{} and \SQ{}; on RGB Input, while VP uses voxelized, and SQ uses a point-cloud input.
($^*$Note that the \OccNet{} results are post-processed with smoothing).
%
%
}
\label{fig:shapenet}
\end{figure*}
\begin{table*}
\begin{center}
\setlength\tabcolsep{3pt} 
\resizebox{\textwidth}{!}{
\begin{tabular}{l|ccc|ccc|ccc} 
\toprule
\multirow{2}{*}{Category} & \multicolumn{3}{c}{IoU}                    & \multicolumn{3}{c}{Chamfer-$L_1$ }                 & \multicolumn{3}{c}{F-Score}                \\
                          & OccNet          & SIF    & Ours            & OccNet          & SIF            & Ours            & OccNet          & SIF   & Ours             \\ 
\midrule
airplane                  & 0.728           & 0.662                  & \textbf{0.739}  & 0.031           & 0.044          & \textbf{0.025}  & 79.52           & 71.40 & \textbf{84.68}   \\
bench                     & \textbf{0.655}  & 0.533                  & 0.631           & \textbf{0.041}  & 0.082          & 0.043           & 71.98           & 58.35 & \textbf{77.68}   \\
cabinet                   & \textbf{0.848}  & 0.783                  & 0.830           & 0.138           & 0.110          & \textbf{0.048}  & 71.31           & 59.26 & \textbf{76.09}   \\
car                       & \textbf{0.830}  & 0.772                  & 0.826           & 0.071           & 0.108          & \textbf{0.031}  & 69.64           & 56.58 & \textbf{77.75}   \\
chair                     & \textbf{0.696}  & 0.572                  & 0.681           & 0.124           & 0.154          & \textbf{0.115}  & 63.14           & 42.37 & \textbf{65.39}   \\
display                   & \textbf{0.763}  & 0.693                  & 0.762           & 0.087           & 0.097          & \textbf{0.065}  & 63.76           & 56.26 & \textbf{71.41}   \\
lamp                      & \textbf{0.538}  & 0.417                  & 0.494           & 0.678           & \textbf{0.342} & 0.352           & \textbf{51.60}  & 35.01 & 51.37            \\
speaker                   & \textbf{0.806}  & 0.742                  & 0.784           & 0.440           & 0.199          & \textbf{0.112}  & 58.09           & 47.39 & \textbf{60.24}   \\
rifle                     & 0.666           & 0.604                  & \textbf{0.684}  & 0.033           & 0.042          & \textbf{0.023}  & 78.52           & 70.01 & \textbf{83.63}   \\
sofa                      & \textbf{0.836}  & 0.760                  & 0.828           & 0.052           & 0.080          & \textbf{0.036}  & 69.66           & 55.22 & \textbf{75.44}   \\
table                     & \textbf{0.699}  & 0.572                  & 0.660           & 0.152           & 0.157          & \textbf{0.121}  & 68.80           & 55.66 & \textbf{71.73}   \\
phone                     & \textbf{0.885}  & 0.831                  & 0.869           & 0.022           & 0.039          & \textbf{0.018}  & 85.60           & 81.82 & \textbf{89.28}   \\
vessel                    & \textbf{0.719}  & 0.643                  & 0.708           & 0.070           & 0.078          & \textbf{0.052}  & 66.48           & 54.15 & \textbf{70.77}   \\ 
\midrule
mean                      & \textbf{0.744}  & 0.660                  & 0.731           & 0.149           & 0.118          & \textbf{0.080}  & 69.08           & 59.02 & \textbf{73.49}   \\
\bottomrule
\multicolumn{10}{c}{\textbf{\{Depth\}-to-3D}}
\end{tabular}
\hspace{.1in}
\begin{tabular}{l|ccccc|ccccc|cccc} 
\toprule
\multirow{2}{*}{Category} & \multicolumn{5}{c}{IoU}                                      & \multicolumn{5}{c}{Chamfer-$L_1$ }                                            & \multicolumn{4}{c}{F-Score}                                   \\
                          & P2M   & AtlasNet & OccNet          & SIF   & Ours            & P2M   & AtlasNet        & OccNet          & SIF             & Ours            & AtlasNet        & OccNet           & SIF   & Ours             \\ 
\midrule
airplane                  & 0.420 & -        & 0.571           & 0.530 & \textbf{0.598}   & 0.187 & 0.104           & 0.147            & 0.167 & \textbf{0.093}  & 67.24           & 62.87            & 52.81 & \textbf{68.16}   \\
bench                     & 0.323 & -        & \textbf{0.485}  & 0.333 & 0.461            & 0.201 & 0.138           & 0.155            & 0.261 & \textbf{0.133}  & 54.50           & \textbf{56.91}   & 37.31 & 54.64            \\
cabinet                   & 0.664 & -        & \textbf{0.733}  & 0.648 & 0.709            & 0.196 & 0.175           & 0.167            & 0.233 & \textbf{0.160 } & 46.43           & \textbf{61.79}   & 31.68 & 46.09            \\
car                       & 0.552 & -        & \textbf{0.737}  & 0.657 & 0.675            & 0.180 & 0.141           & 0.159            & 0.161 & \textbf{0.103}  & 51.51           & \textbf{56.91}   & 37.66 & 47.33            \\
chair                     & 0.396 & -        & \textbf{0.501}  & 0.389 & 0.491            & 0.265 & \textbf{0.209 } & 0.228            & 0.380 & 0.337           & 38.89           & \textbf{42.41}   & 26.90 & 38.49            \\
display                   & 0.490 & -        & 0.471           & 0.491 & \textbf{0.576}   & 0.239 & \textbf{0.198}  & 0.278            & 0.401 & 0.223           & \textbf{42.79}  & 38.96            & 27.22 & 40.69            \\
lamp                      & 0.323 & -        & \textbf{0.371}  & 0.260 & 0.311            & 0.308 & \textbf{0.305}  & 0.479            & 1.096 & 0.795           & 33.04           & \textbf{38.35}   & 20.59 & 31.41            \\
speaker                   & 0.599 & -        & \textbf{0.647}  & 0.577 & 0.620            & 0.285 & \textbf{0.245}  & 0.300            & 0.554 & 0.462           & 35.75           & \textbf{42.48}   & 22.42 & 29.45            \\
rifle                     & 0.402 & -        & 0.474           & 0.463 & \textbf{0.515 }  & 0.164 & 0.115           & 0.141            & 0.193 & \textbf{0.106}  & \textbf{64.22}  & 56.52            & 53.20 & 63.74            \\
sofa                      & 0.613 & -        & \textbf{0.680}  & 0.606 & 0.677            & 0.212 & 0.177           & 0.194            & 0.272 & \textbf{0.164}  & 43.46           & \textbf{48.62}   & 30.94 & 42.11            \\
table                     & 0.395 & -        & \textbf{0.506}  & 0.372 & 0.473            & 0.218 & 0.190           & \textbf{0.189 }  & 0.454 & 0.358           & 44.93           & \textbf{58.49}   & 30.78 & 48.10            \\
phone                     & 0.661 & -        & \textbf{0.720}  & 0.658 & 0.719            & 0.149 & 0.128           & 0.140            & 0.159 & \textbf{0.083}  & 58.85           & \textbf{66.09}   & 45.61 & 59.64            \\
vessel                    & 0.397 & -        & 0.530           & 0.502 & \textbf{0.552}   & 0.212 & \textbf{0.151 } & 0.218            & 0.208 & 0.173           & \textbf{49.87}  & 42.37            & 36.04 & 45.88            \\ 
\midrule
mean                      & 0.480 & -        & \textbf{0.571}  & 0.499 & 0.567            & 0.216 & \textbf{0.175 } & 0.215            & 0.349 & 0.245           & 48.57           & \textbf{51.75 }  & 34.86 & 47.36            \\
\bottomrule
\multicolumn{15}{c}{\textbf{RGB-to-3D} }
\end{tabular}

} 
\end{center}
\drawmyfilename{}
\vspace{1em}
\caption{
\textbf{Reconstruction performance on ShapeNet/Multi} --
We evaluate our method against \PtoM{}, \AtlasNet{}, \OccNet{} and 
\SIF{}. We provide in input either (left) a collection of depth maps or (right) a single color image.
For \AtlasNet{}, note that IoU cannot be measured as the meshes are not watertight.
We omit \VP{}, as it only produces a very rough shape decomposition.
%
%
}
\label{tbl:shapenet}
\end{table*}


\subsection{Abstraction -- Figure~\ref{fig:ratedistortion}, \ref{fig:retrieval}, \ref{fig:partnet}}
As our convex decomposition is learnt on a shape collection, the convexes produced by our decoder are in natural correspondence -- e.g. we expect the same $k$-th convex to represent the leg of a chair in the chairs dataset.
We analyze this quantitatively on the PartNet dataset~\cite{mo2019partnet}. We do so by verifying whether the $k$-th component is consistently mapped to the same PartNet part label; see \Figure{partnet}~(left) for the distribution of PartNet labels within each component.
We can then assign the most commonly associated label to a given convex to segment the PartNet point cloud, achieving a relatively high accuracy; see \Figure{partnet}~(right).
This reveals how our representation captures the semantic structure in the dataset.
We also evaluate our shape abstraction capabilities by varying the number of components and evaluating the trade-off between representation parsimony and reconstruction accuracy; we visualize this via Pareto-optimal curves in the plot of \Figure{ratedistortion}.
We compare with \SIF{}, and note that thanks to the generalized shape space of our model, our curve dominates theirs regardless of the number of primitives chosen.
We further investigate the use of natural correspondence in a part-based retrieval task. We first encode an input into our representation, allow a user to select a few parts of interest, and then use this (incomplete) shape-code to fetch the elements in the training set with the closest (partial) shape-code; see~\Figure{retrieval}.

\subsection{Reconstruction -- \Table{shapenet} and \Figure{shapenet}}
We quantitatively evaluate the reconstruction performance against a number of state-of-the-art methods given inputs as multiple depth map images (\{Depth\}-to-3D) and a single color image (RGB-to-3D); see \Table{shapenet}.
A few qualitative examples are displayed in \Figure{shapenet}.
We find that CvxNet is:
\CIRCLE{1} consistently better than other \textit{part decomposition} methods (SIF, VP, and SQ) which share the common goal of learning \textit{shape elements};
\CIRCLE{2} in general comparable to the state-of-the-art reconstruction methods;
\CIRCLE{3} better than the leading technique (\OccNet{}) when evaluated in terms of F-score, and tested on multi-view depth input.
Note that~\SIF{} first trains for the template parameters on (\{Depth\}-to-3D) with a reconstruction loss, and then trains the RGB-to-3D image encoder with a parameter regression loss; conversely, our method trains both encoder and decoder of the RGB-to-3D task from \textit{scratch}.

\subsection{Ablation studies}
\label{sec:ablation}
We summarize here the results of several ablation studies found in the \textbf{supplementary material}.
Our analysis reveals that the method is relatively insensitive to the dimensionality of the bottleneck $|\latent|$.
We also investigate the effect of varying the number of convexes $K$ and number of hyperplanes $H$ in terms of reconstruction accuracy and inference/training time.
Moreover, we quantitatively demonstrate that using signed distance as supervision for $\loss{approx}$ produces significantly worse results and at the cost of slightly worse performance we can collapse $\loss{guide}$ and $\loss{loc}$ into one.
Finally, we perform an ablation study with respect to our losses, and verify that each is beneficial towards effective learning.
\section{Conclusions}
We propose a differentiable representation of convex primitives that is amenable to learning. The inferred representations are \textit{directly usable} in graphics/physics pipelines; see \Figure{teaser}.
Our self-supervised technique provides more detailed reconstructions than very recently proposed part-based techniques~(\SIF{} in \Figure{ratedistortion}), and even consistently outperforms the leading reconstruction technique on multi-view input~(\OccNet{} in \Table{shapenet}).
In the future we would like to generalize the model to be able to predict a variable number of parts~\cite{tulsiani_cvpr17}, understand symmetries and modeling hierarchies~\cite{yu_cvpr19}, and include the modeling of rotations~\cite{tulsiani_cvpr17}.
Leveraging the invariance of hyperplane ordering, it would be interesting to investigate the effect of permutation-invariant encoders~\cite{acne}, or remove encoders altogether in favor of \textit{auto-decoder} architectures~\cite{park2019deepsdf}.

\paragraph{Acknowledgements}
We would like to acknowledge Luca Prasso and Timothy Jeruzalski for their help with preparing the rigid-body simulations, Avneesh Sud and Ke Li for reviewing our draft, and Anton Mikhailov, Tom Funkhouser, and Erwin Coumans for fruitful discussions.
\clearpage
{
    \small
    \bibliographystyle{ieee_fullname}
    \bibliography{bib_macros,bib_main}
}
\clearpage
\title{\textbf{CvxNet: Learnable Convex Decomposition} \\ Supplementary Material
}
\author{}
\date{}
\maketitle

\begin{figure}
\begin{overpic} 
[width=\linewidth]
{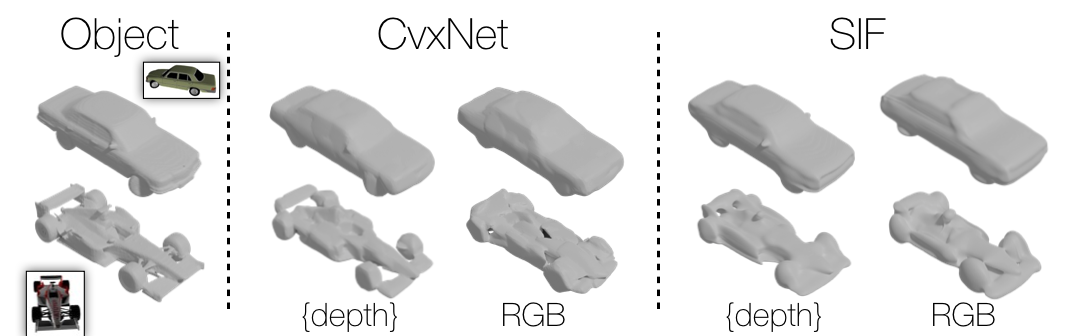}
\end{overpic}
\drawmyfilename{}
\caption{
\textbf{Depth vs. Color} --
A qualitative example illustrating the degradation in performance as we move from \{depth\} to a \enquote{weaker} RGB input.
In the top row we illustrate a model where the frequency of the surface is low. In this case, both \{depth\} input and RGB input can approximate the shape relatively accurately. 
In contrast, the bottom row shows when the ground truth surface has high frequency, the RGB input results lose many details while the \{depth\} input results remain accurate.
}
\label{fig:depthvsrgb}
\end{figure}
\begin{figure}
\begin{center}
\setlength\tabcolsep{3pt} 
\resizebox{0.6\linewidth}{!}{
\begin{tabular}{c|ccc}
\toprule
Losses & IoU & Chamfer-$L_1$ & F-Score \\
\midrule
Original & \textbf{0.731} & \textbf{0.080} & \textbf{73.49\%} \\
Merged & 0.720 & 0.087 & 71.62\% \\
\bottomrule
\end{tabular}
} 
\end{center}
\drawmyfilename{}
\vspace{\baselineskip}
\caption{
\textbf{Original Losses vs. Merged Losses} -- A quantitative comparison between the original losses and the merged version where guidance loss and localization loss are collapsed as described in \ref{sec:mergedloss}. At the cost of slightly worse performance, we can simplify the training objective.
%
%
}
\label{tbl:ablationmergeloss}
\end{figure}
\begin{figure}[t]
\begin{center}
\resizebox{\linewidth}{!}{
\begin{tabular}{c|c|cc|cc} 
\toprule
\multirow{2}{*}{Class} & \multirow{2}{*}{\# exemplars} & \multicolumn{2}{c|}{\textbf{\{Depth\}-to-3D } } & \multicolumn{2}{c}{\textbf{RGB-to-3D} }  \\
                       &                               & F\% Single & F\% Multi               & F\% Single  & F\% Multi            \\ 
\midrule
table                  & 5958                          & 70.97          & \textbf{71.71 }             & \textbf{50.29}  & 48.10                    \\
car                    & 5248                          & 77.11          & \textbf{77.75 }             & \textbf{51.34}  & 47.33                    \\
chair                  & 4746                          & 62.21          & \textbf{65.39 }             & 38.12           & \textbf{38.49}           \\
plane               & 2832                          & 83.68          & \textbf{84.68 }             & \textbf{75.19}  & 68.16                    \\
sofa                   & 2222                          & 67.89          & \textbf{75.44 }             & \textbf{43.36}  & 42.11                    \\
rifle                  & 1661                          & 82.73          & \textbf{83.63 }             & \textbf{69.62}  & 63.74                    \\
lamp                   & 1624                          & 46.46          & \textbf{51.37 }             & \textbf{32.49}  & 31.41                    \\
vessel                 & 1359                          & 65.71          & \textbf{70.77 }             & \textbf{48.44}  & 45.88                    \\
bench                  & 1272                          & 68.20          & \textbf{77.68 }             & \textbf{59.27}  & 54.64                    \\
speaker                & 1134                          & 50.37          & \textbf{60.24 }             & 28.07           & \textbf{28.45}           \\
cabinet                & 1101                          & 66.75          & \textbf{76.09 }             & 45.73           & \textbf{46.09}           \\
display                & 767                           & 61.66          & \textbf{71.41 }             & \textbf{40.31}  & 38.96                    \\
phone                  & 737                           & 84.93          & \textbf{89.28 }             & 63.58           & \textbf{66.09}           \\ 
\midrule
mean                   & 2359                          & 68.36          & \textbf{73.50 }             & \textbf{49.68 } & 47.65                    \\
\bottomrule
\end{tabular}
} 
\end{center}
\drawmyfilename{}
\vspace{\baselineskip}
\caption{
\textbf{Single-class vs. multi-class} --
It is interesting to note that training on single vs. multi class has a behavior different from what one would expect (i.e. overfitting to a single class is beneficial).
Note how training in multi-class on the \{Depth\}-to-3D input improves the reconstruction performance across the \textit{entire} benchmark.
Conversely, with RGB-to-3D input, single class training is beneficial in most cases.
We explain this by the fact that RGB inputs have complex texture which is stable within each class but not easily transferable across classes. Contrarily, local geometries learned from \{depth\} are agnostic to classes. 
%
}
\label{fig:singlevsmulti}
\end{figure}

\begin{figure}
\begin{center}
\setlength\tabcolsep{3pt} 
\resizebox{\linewidth}{!}{
\begin{tabular}{c|ccc}
\toprule
\diagbox{K}{H}  & 12 & 25 & 50 \\
\midrule
12 & 0.709 / 0.139 / \textbf{44} & 0.712 / 0.127 / 59 & 0.714 / 0.124 / 87 \\
25 & 0.717 / 0.100 / 60 & 0.720 / 0.099 / 92 & 0.721 / 0.096 / 153 \\
50 & 0.724 / 0.088 / 92 & 0.730 / 0.083 / 156 & \textbf{0.731} / \textbf{0.080} / 280 \\
\bottomrule
\end{tabular}  
} 
\end{center}
\drawmyfilename{}
\vspace{\baselineskip}
\caption{
\textbf{Ablation on model complexity} -- We analyze how the number of hyperplanes $H$ and number of convexes $K$ relate to mIoU / Chamfer-$L_1$ / Inference Time (ms).
The mIoU and Chamfer-$L_1$ are measured on the test set of ShapeNet (multi-class) with multi-view depth input.
We measure the inference time of a batch with $32$ examples and $2048$ points, which is equivalent to one forward propagation step at training time.
}
\label{fig:complexity}
\end{figure}
\begin{figure}
\begin{center}
\setlength\tabcolsep{3pt} 
\small
  \begin{tabular}{c|ccccc}
  \toprule
    \diagbox{Metric}{Loss}  & All & $-\loss{decomp}$ & $-\loss{unique}$ & $-\loss{guide}$ & $-\loss{loc}$ \\
\midrule
Vol-IoU & \textbf{0.567} & 0.558             & 0.545             & 0.551            & 0.558           \\
Chamfer & \textbf{0.245} & 0.308             & 0.313             & 0.335            & 0.618           \\
F\%     & \textbf{47.36} & 45.29             & 44.03             & 45.88            & 46.01    \\
\bottomrule
\end{tabular}
\end{center}
\drawmyfilename{}
\vspace{\baselineskip}
\caption{
\textbf{Ablation on losses} -- 
We test on ShapeNet multi with RGB input, where each column (from 2 to 5) removes ($-$) one loss term from the training. We observe that each loss can improve the overall performance.
}
\label{fig:ablationlosses}
\end{figure}
\begin{figure}
\begin{center}
\setlength\tabcolsep{3pt} 
\small
\begin{tabular}{c|ccc}
\toprule
Supervision & IoU & Chamfer-$L_1$ & F-Score \\
\midrule
Indicator & \textbf{0.747} & \textbf{0.035} & \textbf{83.22\%}\\
SDF & 0.650 & 0.045 & 73.08\% \\
\bottomrule
\end{tabular}
\end{center}
\drawmyfilename{}
\vspace{\baselineskip}
\caption{
\textbf{Ablation SDF Training vs. Indicator Training} -- We study the difference between learning signed distance functions(i.e., replacing $\object(\x)$ with signed distance in \eq{loss_approx} and removing $\sigmoid$ in \eq{indicator}) and learning indicator functions. Note how the performance would degenerate significantly when the model learns to predict signed distances.
}
\label{tbl:ablationsdf}
\end{figure}
\begin{figure}
\begin{minipage}[c]{.25\linewidth}
  \begin{center}
    \setlength\tabcolsep{3pt} 
    \resizebox{\linewidth}{!}{
      \begin{tabular}{c|c}
        $|\boldsymbol{\lambda}|$  & F-Score \\
        \midrule
        32 & 72.24\% \\
        64 & 73.31\% \\
        128 & 73.14\% \\
        256 & \textbf{73.49\%}
        \end{tabular}  
    } 
    \end{center}
\end{minipage}
\begin{minipage}[c]{.74\linewidth}
\caption{
\textbf{Ablation latent size} --
We study the performance of our architecture as we vary the number of latent dimensions in \Figure{cvxfam}. Note how the reconstruction performance remains relatively stable as we vary this hyper-parameter.
}
\label{tbl:ablationlambda}
\end{minipage}
\end{figure}
\begin{figure}
\begin{center}
\begin{overpic} 
[width=\linewidth]
{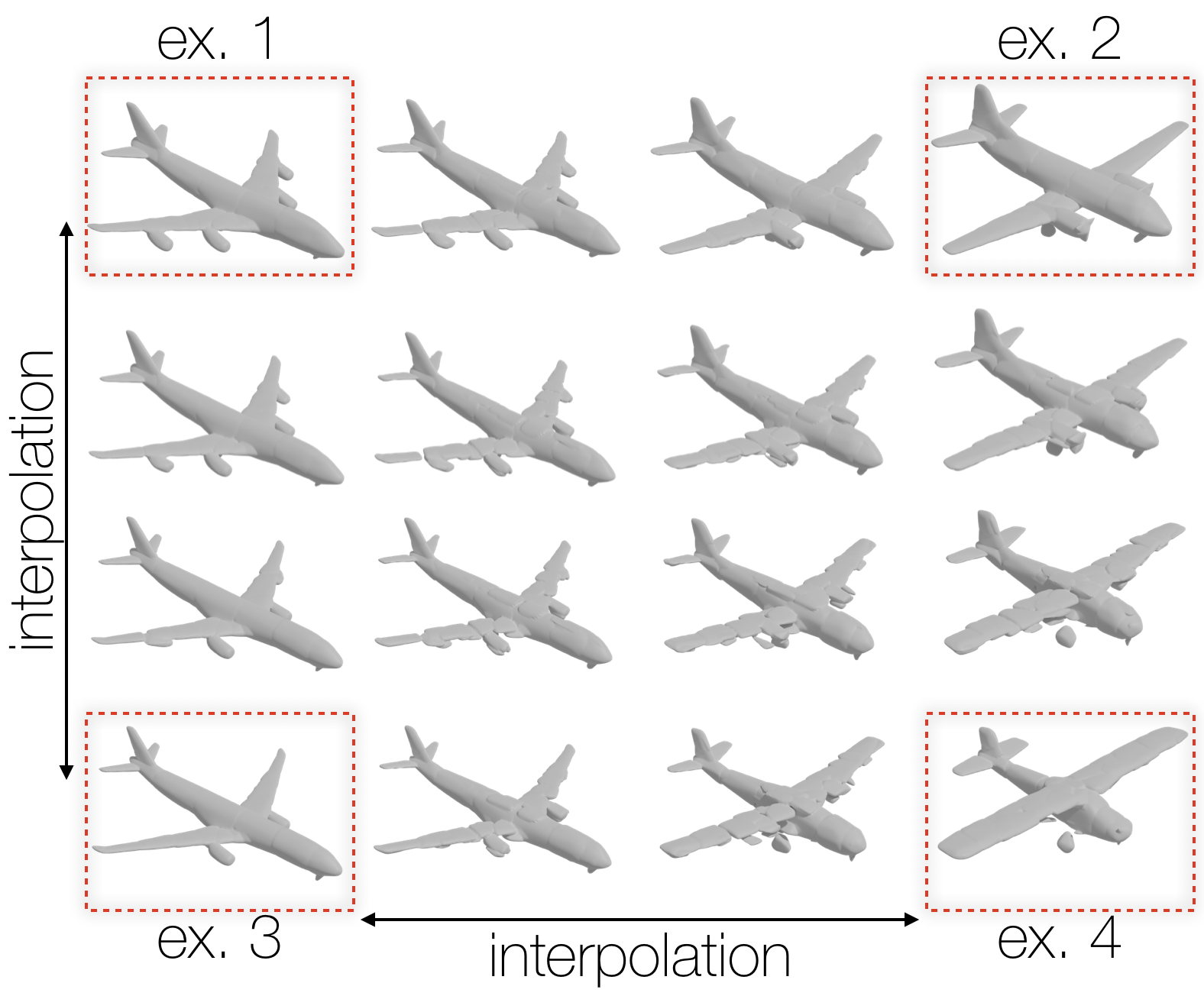}
\end{overpic}
\end{center}
\drawmyfilename{}
\caption{
\textbf{Shape interpolation} --
Analogously to \Figure{interp2d}, we encode the four models in the corners, and generate the intermediate results via linear interpolation. These results can be better appreciated in the \textbf{supplementary video} where we densely interpolate between the corner examples. 
%
%
}
\label{fig:interp3d}
\end{figure}


\section{Union of smooth indicator functions}
\label{sec:union}
\noindent

We define the smooth indicator function for the $k$-th object:
\begin{equation}
    \convex_k(\x) = \sigmoid_{\slope} (-\Phi_\smooth^k(\x)),
\end{equation}
where $\Phi_\smooth^k(\x)$ is the $k$-th object signed distance function. In constructive solid geometry the union of signed distance function is defined using the $\min$ operator.
Therefore the union operator for our indicator function can be written:
\begin{align}
\union\{ \convex_k (\x) \} &= \sigmoid_{\slope} (-\min_k \{\Phi_\smooth^k(\x)\}) \\
&= \sigmoid_{\slope} (\max_k \{-\Phi_\smooth^k(\x)\})
\label{eq:commutative}
\\
\nonumber
&= \max_k \{\sigmoid_{\slope}(-\Phi_\smooth^k(\x))\} = \max_k \{ \convex_k (\x) \}.
\end{align}
Note that the $\max$ operator is \textit{commutative} with respect to monotonically increasing functions allowing us to extract the $\max$ operator from the $\sigmoid_{\slope}(\cdot)$ function in \eq{commutative}.

\section{Merged guidance loss and localization loss}
\label{sec:mergedloss}
While the guidance loss \eq{guide} and localization loss \eq{loc} are designed by different motivations, they are inherently consistent in encouraging the convex elements to reside close to the ground truth. Therefore, we propose an alternative training strategy to merge these two losses into a single one:
\begin{equation}
\loss{merged} = \tfrac{1}{K} \sum_k \tfrac{1}{N} \sum_{\x \in \mathcal{N}_k} \| \text{ReLU}(\Phi_k(\x)) \|^2,
\label{eq:mergedloss}
\end{equation}
%
The idea is that while $\Phi_k(\x)$ is approximate, it can still be used to get weak gradients.
Essentially, this means that each convex needs to \enquote{explain} the N closest samples of $\object$.

\section{Proof of auxiliary null-space loss}
Consider a collection of hyperplanes $\mathbf{h}=\{(\n_h, d_h)\}$
where each pair $(\n_h, d_h)$ denotes
$\halfspace_h(\x)=\n_h \cdot \x + d_h$.
We like to show that amongst all possible translations of this
collection, there is a unique one that minimizes $\loss{unique} = \tfrac{1}{H} \sum_h \halfspace_h(\x)^2$.
We can rewrite this sum as
\begin{align}
    \| (\halfspace_h(\x))_h \|^2
    = \| (d_h + \n_h \cdot \x)_h \|^2
    = \| \mathbf{d_h} + N \x \| ^2
\end{align}
where $\mathbf{d_h}{=}(d_h)_h$ is a vector with entries $d_h$, and $N$ is the matrix formed by $\n_h$ as columns. However it is well known that the above minimization has a unique solution, and the minimizing solution can be computed explicitly (e.g. by using the Moore-Penrose inverse of $N$).

We also note that $\loss{unique}$ has a nice geometric description.
In fact, minimizing $\loss{unique}$ is essentially ``centring'' the convex body at the origin.
To see this we first prove the following:

\begin{observation}
Let $\halfspace_h(\x)=\n_h\cdot\x+d_h$ be a hyperplane and let $\x_0$ be any point.
Then the distance between $\x_0$ and the hyperplane $\halfspace_h$ is given by ${|\halfspace_h(\x_0)|} / {\|\n_h\|}$.
\end{observation}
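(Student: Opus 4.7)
The plan is to compute the orthogonal projection of $\x_0$ onto the hyperplane $\halfspace_h = \{\x : \n_h \cdot \x + d_h = 0\}$ and then take the Euclidean distance from $\x_0$ to that projection. The key geometric fact is that the closest point on an affine hyperplane to an external point lies along the line through that point in the direction of the normal.

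First, I would parameterize a candidate nearest point as $\x^* = \x_0 + t\,\n_h$ for a scalar $t$ to be determined, which is the unique intersection of the normal line through $\x_0$ with the hyperplane (assuming $\|\n_h\| \neq 0$). Substituting this parameterization into the defining equation $\n_h\cdot\x^*+d_h=0$ and solving yields
\begin{equation}
t = -\frac{\n_h\cdot\x_0 + d_h}{\|\n_h\|^2} = -\frac{\halfspace_h(\x_0)}{\|\n_h\|^2}.
\end{equation}
The length of $\x^* - \x_0 = t\,\n_h$ is then $|t|\cdot\|\n_h\| = |\halfspace_h(\x_0)|/\|\n_h\|$, which is the claimed formula.

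To finish, I would verify that $\x^*$ is in fact the global minimizer of $\|\y - \x_0\|$ over $\y\in\halfspace_h$, not merely a critical point along one line. For any $\y\in\halfspace_h$, write $\y - \x_0 = (\x^* - \x_0) + (\y - \x^*)$. The first summand is parallel to $\n_h$, while the second lies in the linear subspace tangent to $\halfspace_h$ and is hence orthogonal to $\n_h$. Pythagoras then gives
\begin{equation}
\|\y - \x_0\|^2 = \|\x^* - \x_0\|^2 + \|\y - \x^*\|^2 \geq \|\x^* - \x_0\|^2,
\end{equation}
with equality iff $\y = \x^*$, establishing that the projection distance is indeed the distance from $\x_0$ to $\halfspace_h$.

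There is no real obstacle: this is the classical point-to-hyperplane distance formula, and the only subtlety is justifying that the orthogonal projection is the actual minimizer, which the Pythagorean decomposition above handles cleanly. The only assumption is $\|\n_h\|\neq 0$, which is implicit since otherwise $\halfspace_h$ does not define a genuine hyperplane.
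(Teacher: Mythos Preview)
Your proof is correct. It differs from the paper's argument in presentation rather than substance: the paper picks an arbitrary point $X$ on the hyperplane and computes $\|\proj_{\n_h}(X-\x_0)\|$ directly via the projection formula, arriving at $|\halfspace_h(\x_0)|/\|\n_h\|$ in three lines, while you explicitly construct the foot of the perpendicular $\x^*=\x_0+t\,\n_h$, solve for $t$, and then add a Pythagorean step to certify that $\x^*$ is the global minimizer. The two computations are equivalent (your $t\,\n_h$ is exactly the paper's $-\proj_{\n_h}(X-\x_0)$), but your version is more self-contained: the paper simply asserts that the distance equals the norm of the projection onto the normal, whereas you actually justify that claim. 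Either route is fine for a result this classical.
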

\begin{proof}
  Let $X$ be any point on the hyperplane $h$, so $\halfspace_h(X)=0$.
  Then the distance of $\x_0$ to $\halfspace_h$ is the norm of the projection of $X-\x_0$
  onto the norm of $\halfspace_h$, $\n_h$.
  Using standard formulas we get, as desired:
  \begin{align}
      \|\proj_{\n_h}(X-\x_0)\| & = \left\| \frac{(\x_0 - X)\cdot \n_h}{\|\n_h\|^2} \n_h \right\| \\
      & = \frac{\|\n_h \cdot x_0 - \n_h \cdot X\|}{\|\n_h\|} \\
      & = \frac{\|\n_h \cdot x_0 + d_h\|}{\|\n_h\|}
  \end{align}
\end{proof}
Note that if we assume that $\n_h$ are all normalized, then $|\halfspace_h(\x_0)|$ gives us the distance
to the hyperplane $h$ for each hyperplane $h$.
Therefore, given a convex polytope defined by collection of hyperplanes $\mathbf{h} = \{(\n_h, d_h)\}$,
the $\sum_h |\halfspace_h(\x)|^2$ is the sum of squares of distance of $\x$ to the sides of the polytope.
Also note that if we translate $\mathbf{h}$ by $\x_0$, then we get the new set of hyperplane $\mathbf{h}' = \{(\n_h, d_h + \n_h\cdot \x_0)\} = \{(\n_h, \halfspace_h(\x_0))$. Therefore, 
\begin{equation}
\loss{unique}=\sum_h \|\halfspace_{h}(\mathbf{0})\|^2
\end{equation}
can be interpreted as the square distance of the origin to the collection of hyperplanes $\mathbf{h}$.

\begin{figure*}[t]
\centering
\begin{overpic} 
[width=\linewidth]
{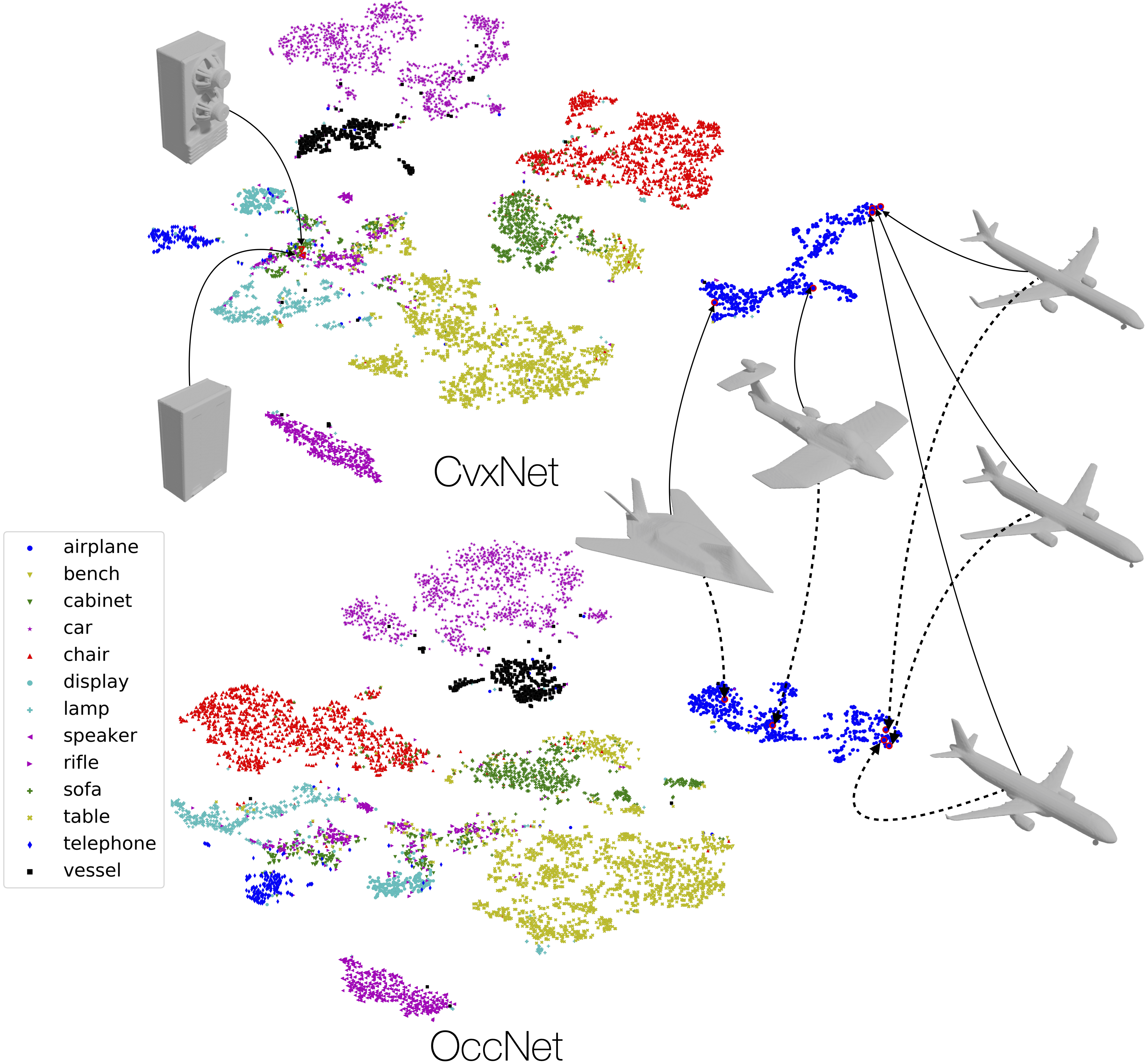}
\end{overpic}
\drawmyfilename{}
\caption{
\textbf{tSNE embedding} --
We visualize the CvxNet~(top) vs. the OccNet~(bottom) tSNE latent space in 2D. We also visualize several examples with arrows taking them back to the CvxNet atlas and dashed arrows pointing them to the OccNet atlas.
Notice how \CIRCLE{1} nearby (distant) samples within the same class have a similar (dissimilar) geometric structure, and 
\CIRCLE{2} the overlap between cabinets and speakers is meaningful as they both exhibit a cuboid geometry.
Our interactive exploration of the t-SNE space revealed that our method produces meaningful embeddings, which are comparable to the ones from \OccNet{}.
}
%
\end{figure*}


\end{document}